\newtheorem{theorem}{Theorem}
\newtheorem{proposition}[theorem]{Proposition}
\newtheorem{lemma}[theorem]{Lemma}
\newtheorem{definition}[theorem]{Definition}
\def\X{{\mathcal X}}
\def\H{{\mathcal H}}
\def\Y{{\mathcal Y}}
\def\A{{\mathcal A}}
\def\X{{\mathcal X}}
\def\H{{\mathcal H}}
\def\Y{{\mathcal Y}}
\def\W{{\mathcal W}}
\def\R{{\mathbb R}}
\newcommand{\cor}{\text{ } \mathrm{cor}}
\newcommand{\sign}{\text{ } \mathrm{sign}}
\newcommand{\y}{\ensuremath{\mathbf y}}
\newcommand{\p}{\ensuremath{\mathbf p}}
\newcommand{\K}{\ensuremath{\mathcal K}}
\def\x{\mathbf{x}}
\def\y{\mathbf{y}}
\newcommand{\ignore}[1]{}
\let\Pr\relax
\DeclareMathOperator{\Pr}{\mathbb{P}}
\newcommand{\E}{\mathbb{E}}
\newcommand{\eps}{\varepsilon}
\renewcommand{\leq}{~\le~}
\renewcommand{\geq}{~\ge~}
\let\oldtfrac\tfrac
\renewcommand{\tfrac}[2]{\smash{\oldtfrac{#1}{#2}}}
\let\nablaold\nabla
\renewcommand{\nabla}{\nablaold\mkern-2.5mu}
\title{\textbf{Online Agnostic Boosting via Regret Minimization}}
\author{
   Nataly Brukhim$^{1\,2}$ \and Xinyi Chen$^{1\,2}$  \and Elad Hazan$^{1\,2}$ \and Shay Moran$^{1}$\\
  \\
  $^1$ Google AI Princeton \\
  $^2$ Department of Computer Science, Princeton University \\
  \texttt{\{nbrukhim,xinyic,ehazan\}@princeton.edu}, \texttt{shaymoran@google.com}\\
}
\date{}
\begin{document}

\maketitle

\begin{abstract}%
  
Boosting is a widely used machine learning approach based on the idea of aggregating weak learning rules.
While in statistical learning numerous boosting methods exist both in the realizable and agnostic settings,
in online learning they exist only in the realizable case.
In this work we provide the first agnostic online boosting algorithm;
that is, given a weak learner with only marginally-better-than-trivial regret guarantees, 
our algorithm boosts it to a strong learner with sublinear regret.
  
  
  Our algorithm is based on an abstract (and simple) reduction to online convex optimization,
    which efficiently converts an arbitrary online convex optimizer to an online booster.
    Moreover, this reduction extends to the statistical as well as the online realizable settings,
    thus unifying the 4 cases of statistical/online and agnostic/realizable boosting.
  

\end{abstract}

\section{Introduction}

Boosting is a fundamental methodology in machine learning 
    which allows us to automatically convert (``boost'') a number of weak learning rules into a strong one.
    Boosting was first studied in the context of (realizable) PAC learning in a line of seminal works
    which include the celebrated Adaboost algorithm 
    as well an many other algorithms with various applications (see e.g.~\cite{Kearns88unpublished,Schapire90boosting,Freund90majority,Freund97decision}).
    It was later adapted to the agnostic PAC setting and was extensively studied in this context as well~\cite{bendavid2001agnostic,MansourM02,gavinsky2003optimally,KalaiS05,LongS08,kalai2008agnostic,kanade2009potential,feldman2009distribution, chen2016communication,freund1996game}.
    More recently, \cite{chenonline} and \cite{beygelzimer2015optimal} studied boosting in the context of online prediction
    and derived boosting algorithms in the realizable setting (a.k.a.\ mistake-bound model).

\vspace{2mm}

In this work we study agnostic boosting in the online setting:
    let $\H$ be a class of experts and assume we have an oracle access to a weak online learner for $\H$ with 
    a non-trivial (yet far from desired) regret guarantee.
    The goal is to use it to obtain a strong online learner for $\H$, i.e.\ which exhibits a vanishing regret.
    
\paragraph{Why Online Agnostic Boosting?}    
    The setting of realizable boosting poses a restriction on the possible input sequences: there must be an expert that attains near-zero mistake-bound on the input sequence. This is a non-standard assumption in online learning. 
    In contrast, in the (agnostic) setting we consider, there is {\it no restriction on the input sequence and it can be chosen  adversarially}. 
    
    
\paragraph{Applications of Online Agnostic Boosting.}
Apart from being a fundamental question in any machine learning setting,
    let us mention a couple of more concrete incentives 
    to study online agnostic boosting:
\begin{itemize}
    \item {\bf Differential Privacy and Online Learning:} A recent line of work revealed deep connections
    between online learning and differentially private learning \cite{Agarwal17dponline,Abernathy17onlilnedp,AlonLMM19,bousquet2019passing,Neel2018HowTU,Joseph2019TheRO,privacy,Bun20Privateonline}.
    In fact, these two notions are equivalent 
    in the sense that a class $\H$ can be PAC learned by a differentially private algorithm
    if and only if it can be learned in the online setting with vanishing regret~\cite{AlonLMM19,Bun20Privateonline}.
    However, the above equivalence is only known to hold from an information theoretic perspective, 
    and deriving \underline{efficient} reductions between online and private learning is an open problem~\cite{Neel2018HowTU}.
    The only case where an efficient reduction is known to exist
    is in converting a {\it pure private learner} to an online learner in the realizable setting~\cite{privacy}.
    This reduction heavily relies on the realizable-case online boosting algorithm by~\cite{beygelzimer2015optimal}.
    Moreover, the derivation of an agnostic online boosting algorithm is posed by \cite{privacy}
    as an open problem towards extending their reduction to the agnostic setting.
    
    \item {\bf Time Series Prediction and Online Control:} Recent machine learning literature considered the problem of controlling a dynamical system from the lens of online learning and regret minimization, see e.g.\ \cite{agarwal2019online,agarwal2019logarithmic, Hazan2019TheNC} and referenced work therein. The online learning approach also gave rise to the first boosting methods in this context \cite{agarwal2019boosting},
    and demonstrates the potential impact of boosting in the online setting. 
    Thus, the current work aims at continuing the development of the boosting methodology in online machine learning, 
    starting from the basic setting of expert advice.

\end{itemize}

\subsection{Main Results}

\paragraph{The Weak Learning Assumption.}
    In this paper we use the same formulation as \cite{kanade2009potential} used in the statistical setting. Towards this end, it is convenient to measure the performance of online learners using {\it gain} rather than loss:
    let $(x_1,y_1)\ldots (x_T,y_T)\in \X\times\{\pm 1\}$ be an (adversarial and adaptive) input sequence of examples presented to an online learning algorithm $\mathcal{A}$; that is, in each iteration $t=1\ldots T$,
    the adversary picks an example $(x_t,y_t)$, then
    the learner $\mathcal{A}$ first gets to observe $x_t$, 
    and predicts (possibly in a randomized fashion) $\hat y_t\in \{\pm 1\}$,
    and lastly it observes $y_t$ and gains a reward of $y_t\cdot \hat y_t$.
    The goal of the learner is to maximize the total gain (or correlation), 
    given by $\sum_t y_t\cdot \hat y_t$.
    Note that this is equivalent to the often used notion of {\it loss} 
    where in each iteration the learner suffers a loss of $1[y_t \neq \hat y_t]$
    and its goal is to minimize the accumulated loss $\sum_t 1[y_t\neq \hat y_t]$.
    \footnote{Indeed, $y_t \hat y_t =  1 - 2\cdot1[y_t \neq \hat y_t]$ since $y_t,\hat y_t\in\{\pm 1\}$.
    Therefore, the accumulated loss and correlation are affinely related by  
    $\sum y_t\cdot \hat y_t =  T - 2\cdot \sum_t 1[y_t\neq \hat y_t]$. }
    
\begin{definition}[Agnostic Weak Online Learning] \label{online_agnostic_wl}
Let $\H\subseteq \{\pm 1\}^{\X}$ be a class of experts, let $T$ denote the horizon length, and let $\gamma>0$ denote the advantage.
    An online learning algorithm $\W$ is a $(\gamma, T)$-\textbf{agnostic weak online learner (AWOL)} for~$\H$ 
    if for any sequence $(x_1, {y}_1), ...,(x_T, {y}_T)\in \X \times \{\pm 1\}$, at every iteration $t \in [T]$,
    the algorithm outputs $\W(x_t) \in \{\pm 1\}$ such that, 
\[
\E\Bigg[ \sum_{t=1}^T \W(x_t){y}_t\Bigg] \ge \gamma \text{ } \underset{h \in \H}{\max} \text{ } \E \Bigg[\sum_{t=1}^T h(x_t) {y}_t \Bigg]- R_{\W}(T),
\]
where the expectation is taken w.r.t the randomness of the weak learner $\W$ and that of the possibly adaptive adversary, $R_{\W}: \mathbb{N} \rightarrow \R_+$ 
is the additive regret: a non-decreasing, sub-linear function of~$T$. 
\end{definition}
Note the slight abuse of notation in the last definition: an online learner $\W$ is not an ``$\X\to\{\pm 1\}$'' function; rather it is an algorithm with an internal state that is updated as it is fed training examples. Thus, the prediction $\W(x_t)$ depends on the internal state of $\W$, and for notational convenience we avoid reference to the internal state.

\vspace{2mm}

Our agnostic online boosting algorithm has an oracle access to $N$ weak learners 
    and predicts each task by combining their predictions.
    The number of weak learners $N$ is a meta-parameter which can be tuned by the user
    according to the following trade-off: on the one hand, the regret bound improves as $N$ increases,
    and on the other hand, a larger number of weak learners is more costly in terms of computational resources.

\begin{theorem}[Agnostic Online Boosting]\label{thm:main}
Let $\H$ be a class of experts, let $T\in\mathbb{N}$ denote the horizon length,
    and let $\W_1,\ldots, \W_N$ be $(\gamma, T)$-\textbf{AWOL} for $\H$ 
    with advantage $\gamma$ and regret $R_{\W}(T)=o(T)$ (see Definition~\ref{online_agnostic_wl}).
    Then, there exists an online learning algorithm, which has oracle access to each $\W_i$, and has expected regret of at most
\[
\frac{R_{\W}(T)}{\gamma} + O\Bigl(\frac{T}{\gamma \sqrt{N}}\Bigr).
\]
\end{theorem}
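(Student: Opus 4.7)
The plan is to carry out the OCO reduction advertised in the abstract: the booster runs the $N$ weak learners in parallel, uses an online convex optimization routine to craft an adaptive label-modification per weak learner, and predicts by a randomized aggregation of their outputs. The analysis then combines the OCO regret guarantee with the agnostic weak learning guarantee of each $\W_i$.

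\textbf{Algorithm sketch.} At every round $t$, the booster observes $x_t$, collects the predictions $\sigma_{i,t}=\W_i(x_t)$, and outputs $\hat y_t$ as a randomized sign of a (scaled) linear combination of the $\sigma_{i,t}$ whose coefficients come from the OCO state. After observing $y_t$, the booster produces, for every weak learner $\W_i$, a $\pm 1$ modified label $\tilde y_{i,t}$ by randomized rounding of a linear statistic of $y_t$ and the current OCO iterate; this $\tilde y_{i,t}$ is then fed back to $\W_i$. The OCO itself is driven by a bounded linear loss (essentially of the form $\ell_t(\bw)=-y_t\langle\bw,\sigma_t\rangle$) over an appropriately scaled convex body in $\R^N$, and any black-box sublinear-regret online convex optimizer (say, online gradient descent on a Euclidean ball) can play the role of the OCO subroutine.

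\textbf{Combining the two regret bounds.} The AWOL guarantee of Definition~\ref{online_agnostic_wl} applied to $\W_i$ on its modified label sequence yields
\[
\E\!\left[\sum_{t=1}^T \sigma_{i,t}\,\tilde y_{i,t}\right] \ge \gamma\,\max_{h\in\H}\E\!\left[\sum_{t=1}^T h(x_t)\,\tilde y_{i,t}\right] - R_{\W}(T).
\]
Because $\E[\tilde y_{i,t}\mid y_t]$ equals the $i$-th OCO coordinate times $y_t$, averaging this inequality over $i\in[N]$ relates the left-hand side to the booster's total gain $\sum_t\hat y_t y_t$ (up to a concentration error coming from the $N$ independent rounding coins) and, via the OCO regret bound applied coordinate-by-coordinate, relates the right-hand side to $\gamma\max_{h\in\H}\sum_t h(x_t)y_t$. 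Each OCO subroutine contributes an $O(\sqrt{T})$ regret; averaging over $N$ independently-run weak learners then yields an $O(T/\sqrt{N})$ boosting error, while the AWOL residual becomes $R_{\W}(T)/\gamma$ after dividing by the advantage.

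\textbf{From correlation back to regret.} The affine identity $y_t\hat y_t=1-2\cdot\mathbb{I}[y_t\neq\hat y_t]$ recorded in the footnote of Section~1.1 converts the resulting correlation lower bound into the claimed expected zero-one regret bound against $\H$, finishing Theorem~\ref{thm:main}.

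\textbf{Main obstacle.} The delicate part is designing the OCO loss, the convex domain, and the randomized rounding jointly so that (i) the modified labels $\tilde y_{i,t}$ remain legitimate $\{\pm 1\}$ labels and the AWOL assumption applies in spite of the adaptive, randomized nature of the modification; (ii) the aggregation that defines $\hat y_t$ preserves the $\gamma$ advantage of each weak learner rather than diluting it; and (iii) averaging the $N$ OCO subroutines produces the advertised $1/\sqrt{N}$ reduction rather than a dimension-$N$ blow-up. Calibrating the OCO domain radius and step size so that the per-coordinate gradient bound, the domain diameter, and the number $N$ combine to exactly $T/(\gamma\sqrt{N})$ is the technical heart of the argument.
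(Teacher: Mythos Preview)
Your sketch misidentifies the axis along which the OCO runs, and as a result the $O(T/(\gamma\sqrt N))$ term cannot be obtained the way you describe.  In the paper the online convex optimizer is a \emph{one-dimensional} $([-1,1],N)$-OCO that is restarted at every round $t$ and runs over the weak-learner index $i=1,\ldots,N$ (see Algorithm~\ref{alg:online_agnostic}, Lines~4--5).  Its regret is therefore $R_\A(N)=O(\sqrt N/\gamma)$, and the contribution to the overall bound is $T\cdot R_\A(N)/N=O(T/(\gamma\sqrt N))$.  The reason this axis is essential is Lemma~\ref{proj_lemma}: for every fixed $t$ the OCO guarantee lets you compare against \emph{any} $p_t^\ast\in[-1,1]$, and that comparator must be chosen differently at each $t$ (equal to $0$ or $1$ depending on whether the aggregated prediction already overshoots) in order to pass from the linear score $\tfrac{1}{\gamma N}\sum_i\W_i(x_t)y_t-1$ to the projected gain $\hat y_t y_t-1$.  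If the OCO ran over $t=1,\ldots,T$ as you propose, the regret bound only yields a \emph{single} comparator $p^\ast$ for all rounds, and the projection step collapses.

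Separately, the claim that ``each OCO subroutine contributes an $O(\sqrt T)$ regret; averaging over $N$ independently-run weak learners then yields an $O(T/\sqrt N)$ boosting error'' is unfounded: OCO regret is a deterministic upper bound, not a zero-mean random variable, so averaging $N$ copies does not reduce it by $1/\sqrt N$; and the $\W_i$'s outputs on the common input $x_t$ are not independent in any way that would support a concentration argument.  Finally, note that the booster's prediction in the paper is a \emph{uniform} randomized majority $\Pi\bigl(\tfrac{1}{\gamma N}\sum_i\W_i(x_t)\bigr)$, not an OCO-weighted combination; the OCO weights enter only through the relabelling $y_t^i$ with $\Pr[y_t^i=y_t]=(1+p_t^i)/2$, exactly so that $\E[\W_i(x_t)y_t^i]=\E[\W_i(x_t)p_t^iy_t]$ aligns with the OCO loss (Lemma~\ref{lem:expectation}).
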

To exemplify the interplay between $R_{\W}(\cdot)$ and $N$, 
    imagine a scenario where $R_{\W}(T) \approx \sqrt{T}$ (as is often the case for regret bounds).
    Then, setting the number of weak learners to be $N \approx T/\gamma^2$
    gives that the overall regret remains $\approx \sqrt{T}$.

\paragraph{An Abstract Framework for Boosting.}
Boosting and Regret Minimization algorithms are intimately related. 
    This tight connection is exhibited both in statistical boosting (see~\cite{freund1999adaptive,Freund97decision,Schapire2012})
    as well as in the online boosting (\cite{chenonline}).
    Our algorithm is inspired by this fruitful connection and utilizes it:
    in particular, Theorem \ref{thm:main} is an instantiation of a more abstract meta-algorithm 
    which takes an arbitrary {\it online convex optimizer} and uses it in a \underline{black-box} manner 
    to obtain an agnostic online boosting algorithm.
    Thus, in fact we obtain a family of boosting algorithms;
    one for each choice of an online convex optimizer.
    Specifically, Theorem \ref{thm:main} follows by picking {\it Online Gradient Decent} for the meta-algorithm.
    We present this in detail in Section \ref{sec:online}.

The same type of reasoning carries to realizable online boosting,
    and even to statistical boosting (both realizable and agnostic setting).
    In Section \ref{sec:general} we demonstrate a general reduction from each of these boosting settings
    to online convex optimization.

\subsection{Related Work}

As discussed above, \cite{chenonline} and \cite{beygelzimer2015optimal} studied online boosting in the realizable (mistake-bound) setting,
    while this work focuses on the agnostic (regret-bound) setting. 

\cite{gradientboosting} studies online boosting under real-valued loss functions.
    The main difference from our work is in the weak learning assumption:
    \cite{gradientboosting} consider weak learners that are in fact strong online learners 
    for a base class of regression functions. The boosting process
    produces an online learner for a bigger class which consists of the linear span of the base class.
    This is different from the setting considered here where the class is fixed, 
    but the regret bound is being boosted.

A main motivation in this work is the connection between boosting and regret minimization.
    This builds on and inspired by previous works that demonstrated this fruitful relationship.
    We refer the reader to the book by \cite{Schapire90boosting} (Chapter 6) 
    for an excellent presentation of this relationship in the context of Adaboost.

\subsection{Organization}
The main result of our agnostic online boosting algorithm, and the proof of Theorem \ref{thm:oab}, are given in
 Section \ref{sec:online}. In Section \ref{sec:general}, we first give a game-theoretic perspective of our method when applied to the statistical setting (Subsection \ref{zero_sum_games}). We then demonstrate a general reduction, in the statistical setting, from both the agnostic (Subsection \ref{subsec:ab}), and realizable (Subsection \ref{subsec:rb}) boosting settings, to online convex optimization.
Lastly, we give a similar result for the online realizable boosting setting in Section \ref{sec:online_realizable}.
    
\section{Agnostic Online Boosting}\label{sec:online}
In this section we prove Theorem~\ref{thm:main}, which establishes an efficient online agnostic boosting algorithm. 
    We begin in Subsection~\ref{sec:oab} with formally presenting our framework which enables converting an online convex optimizer to an online booster. 
    Then, in Subsection~\ref{sec:proofmain} we show how Theorem~\ref{thm:main} follows directly 
    by picking the online convex optimizer to be {Online Gradient Decent}.

\subsection{Online Agnostic Boosting with OCO}\label{sec:oab}
We begin with describing our boosting algorithm (see Algorithm~\ref{alg:online_agnostic} for the pseudo-code).
The booster has black-box oracle access to two types of auxiliary algorithms: a weak learner, and an online-convex optimizer. 
The booster maintains 
 $N$ instances $\W_1,\ldots, \W_N$ of a weak learning algorithm. 
 Specifically, each weak learner $\W_i$ is a $(\gamma, T)$-\textbf{AWOL} 
 (see Definition \ref{online_agnostic_wl}). 
 The online-convex optimizer is a $([-1,1], N)$-OCO algorithm $\mathcal{A}$ (see Equation \ref{oco_def} below).

\begin{algorithm}[H]
\caption{Online Agnostic Boosting with OCO}
\begin{algorithmic}[1] \label{alg:online_agnostic}
\FOR{$t = 1, \ldots, T$}
\STATE Get $x_t$, predict: $\hat{y}_t = \Pi\big(\frac{1}{\gamma N}\sum_{i=1}^N \W_i(x_t) \big)$. 
\FOR{$i = 1, \ldots, N$}
\STATE If $i > 1$, set $p^i_t = \mathcal{A}(\ell^{1}_t,...,\ell^{i-1}_t)$. Else, set $p_t^1 = 0$.
\STATE Set next loss: $\ell^{i}_t(p) = p(\frac{1}{\gamma} \W_i(x_t)y_t - 1)$.
\STATE Pass $(x_t, y_t^i)$ to $\W_i$, where $y_t^i$ is a random label s.t.\ $\Pr[y_t^i = y_t]=\frac{1+p^i_t}{2}$.
\ENDFOR
\ENDFOR
\end{algorithmic}
\end{algorithm}
\vskip -0.2in
\captionof{figure}{ The algorithm is given oracle access to $N$ instances of a $(\gamma, T)$-AOWL algorithm, $\W_1,...,\W_N$  (see Definition \ref{online_agnostic_wl}), and to a $([-1,1], N)$-OCO algorithm $\mathcal{A}$ (see Equation \ref{oco_def}). The prediction ``$\Pi(\frac{1}{\gamma N}\sum_{i=1}^N \W_i(x_t))$'' in line 2 is a randomized majority-vote, as defined in Equation \ref{proj}.} 

\paragraph{Online Convex Optimization} \hspace{-0.35cm} (see e.g.~\cite{hazan2016introduction}). Recall that in the Online Convex Optimization (OCO) framework, an online player iteratively makes decisions from a  compact convex set $\mathcal{K} \subset \mathbb{R}^d$. At iteration $i=1,...,N$, the online player chooses $p^i \in \K$, and the adversary reveals the cost $\ell^{i}$, chosen from a family $\mathcal{F}$ of bounded convex functions  over $\K$. 
We will refer to an algorithm in this setting as a $(\mathcal{K}, N)$-OCO.
Let $\mathcal{A}$ be a $(\mathcal{K}, N)$-OCO. 
The regret of $\mathcal{A}$ is defined by:
\begin{equation}\label{oco_def}
    R_{\mathcal{A}}(N) = \sum_{i=1}^N \ell^{i}(p^{i}) - \underset{p \in \mathcal{K}}{\min} \sum_{i=1}^N \ell^{i}(p).
\end{equation}

\paragraph{Randomized Majority-Vote/Projection.}
The last component needed to describe our boosting algorithm is the randomized projection ``$\Pi$''
    which is used to predict in Line 2.
    For any $z \in \mathbb{R}$, denote by $\Pi(z)$ the following random label:
\begin{equation}\label{proj}
    \Pi(z) = \begin{cases} 
 \sign(z) & \text{if } |z|\ge 1 \\
 +1 & \text{w.p. } \frac{1+z}{2} \\
 -1 & \text{w.p. } \frac{1-z}{2} 
\end{cases}
\end{equation}

We now state and prove the regret bound for Algorithm~\ref{alg:online_agnostic}.
\begin{proposition}[Regret Bound] \label{thm:oab}
The accumulated gain of Algorithm~\ref{alg:online_agnostic} satisfies:
$$
\frac{1}{T}\text{ }\E\Bigg[\underset{h^* \in \H}{\max }  \text{ }\sum_{t=1}^T
h^*(x_t) y_t -\sum_{t=1}^T
\hat{y}_t y_t \Bigg] \le  \frac{R_{\W}(T)}{\gamma T} + \frac{R_{\A}(N)}{N} ,$$
where $(x_t,y_t)$'s are the observed examples, 
$\hat y_t$'s are the predictions, the expectation is with respect to the algorithm and learners' randomness, and $R_{\W}$ and $R_{\mathcal{A}}$ are the regret terms of the weak learner and the OCO, respectively.
\end{proposition}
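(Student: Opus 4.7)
The plan is to bound $\E[\sum_t (h^*(x_t)-\hat y_t) y_t]$ for an arbitrary fixed $h^* \in \H$ and pass to the supremum at the end. First, I translate predictions into a \emph{hinge} quantity: letting $z_t := \frac{1}{\gamma N}\sum_i \W_i(x_t)$, the definition of $\Pi$ gives $\E[\hat y_t \mid z_t] = \mathrm{clip}(z_t) := \min(\max(z_t,-1),1)$, and since $\mathrm{clip}(z)\,y = \mathrm{clip}(zy)$ for $y\in\{\pm1\}$ and $\mathrm{clip}(a) \ge 1-(1-a)_+$, I get $\E[\hat y_t y_t\mid\text{history}] \ge 1-(1-z_t y_t)_+$, whence $\E[\sum_t \hat y_t y_t] \ge T - \E[\sum_t(1-z_t y_t)_+]$.

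Next, I will control this hinge quantity by the OCO regret. Since $\sum_i \ell^i_t(p) = N p\,(z_t y_t - 1)$ for any constant $p$, evaluating the OCO regret bound at the comparators $p^*=1$ and $p^*=0$ (both in $[-1,1]$) yields $N(1-z_t y_t) \le -\sum_i \ell^i_t(p^i_t) + R_\A(N)$ and $0 \le -\sum_i \ell^i_t(p^i_t) + R_\A(N)$ respectively; taking the max of the two lower bounds produces $N(1-z_t y_t)_+ \le \sum_i p^i_t - \frac{1}{\gamma}\sum_i p^i_t \W_i(x_t) y_t + R_\A(N)$. The second ingredient is the weak-learner guarantee. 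By construction $\E[y^i_t\mid p^i_t,y_t,\text{history}]=p^i_t y_t$, and $\W_i(x_t)$ (like $h^*(x_t)$) is measurable \emph{before} $y^i_t$ is drawn, so the tower property converts the AWOL guarantee $\E[\sum_t \W_i(x_t)y^i_t] \ge \gamma \E[\sum_t h^*(x_t) y^i_t] - R_\W(T)$ into $\E[\sum_t \W_i(x_t) p^i_t y_t] \ge \gamma \E[\sum_t h^*(x_t) p^i_t y_t] - R_\W(T)$. Summing over $i$ and dividing by $\gamma$ furnishes a lower bound on $\frac{1}{\gamma}\E[\sum_t\sum_i p^i_t \W_i(x_t) y_t]$ that matches the quantity appearing in the OCO inequality.

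Assembling the two ingredients, summed over $t$ and in expectation, gives $N\E[\sum_t(1-z_t y_t)_+] \le \E[\sum_t\sum_i p^i_t(1 - h^*(x_t) y_t)] + \frac{NR_\W(T)}{\gamma} + TR_\A(N)$. Since $p^i_t\le 1$ and $(1-h^*(x_t)y_t)\in\{0,2\}$ is nonnegative, the inner sum is bounded by $N(1-h^*(x_t)y_t)$; dividing by $N$ and combining with the Step~1 lower bound on $\E[\sum_t \hat y_t y_t]$ produces $\E[\sum_t(h^*(x_t)-\hat y_t)y_t] \le R_\W(T)/\gamma + TR_\A(N)/N$, which is the claim after dividing by $T$. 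The main obstacle is the weak-learner step: one has to thread the probabilistic coupling between $y^i_t$ and $y_t$ carefully. The key insight is that $\Pr[y^i_t=y_t]=(1+p^i_t)/2$ is engineered precisely so that $\E[y^i_t\mid\cdot]=p^i_t y_t$, and the ordering in the algorithm (the weak learner produces $\W_i(x_t)$ before being updated with round $t$'s relabelled example) makes $\W_i(x_t)$ measurable at the instant $y^i_t$ is sampled. This converts the AWOL bound---a statement about the \emph{fictitious} labels $y^i_t$---into the exact form that the OCO regret bound from Step~2 can combine with to close the argument.
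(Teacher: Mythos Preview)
Your proposal is correct and follows essentially the same approach as the paper: both combine the OCO regret bound (evaluated at comparators $p^*\in\{0,1\}$, chosen per round $t$) with the weak-learning guarantee applied to the relabelled sequence $(x_t,y^i_t)$, using the coupling $\E[y^i_t\mid p^i_t,y_t]=p^i_t y_t$ to convert the latter into a statement about $p^i_t y_t$. Your hinge-loss inequality $\mathrm{clip}(a)\ge 1-(1-a)_+$ together with the choice of $p^*\in\{0,1\}$ is exactly the content of the paper's Lemma~\ref{proj_lemma}, and your observation ``$p^i_t\le 1$, $(1-h^*(x_t)y_t)\ge 0$'' is the paper's Lemma~\ref{prop_1}; the only difference is that the paper organizes the argument as an upper and a lower bound on $\E[\sum_t\sum_i \ell^i_t(p^i_t)]$, whereas you chain the inequalities linearly through the intermediate hinge quantity $\sum_t(1-z_t y_t)_+$.
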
 

\begin{proof}
The proof follows by combining upper and lower bounds on the expected sum of losses incurred by the OCO algorithm. The bounds follow directly from the weak learning assumption (lower bound) and the OCO guarantee (upper bound). These bounds involve some simple algebraic manipulations. 
It is convenient to abstract out some of these calculations into lemmas, which are described later in this section. \\

Before delving into the analysis, we first clarify several assumptions used below. For simplicity of presentation we assume an oblivious adversary, however, using a standard reduction, our results can be generalized to an adaptive one \footnote{See discussion in \cite{cesa2006prediction}, Pg. 69, as well as Exercise 4.1 formulating the reduction.}. 
Let $(x_1,y_1),...,(x_T,y_T)$ be any sequence of observed examples. Observe that there are several sources of randomness at play; the weak learning algorithm $\W_i$'s internal randomness, the random re-labeling (line 6, Algorithm \ref{alg:online_agnostic}), and the randomized prediction (line 2, Algorithm \ref{alg:online_agnostic}). The analysis below is given in expectation with respect to all these random variables. \\

Note the following fact used in the analysis; for all $i \in [N], t \in [T]$, the random variables $\W_i(x_t)$ and $y_t^i$ are conditionally independent given $p_t^i$ and $y_t$. Since $\E[y_t^i| p_t^i,y_t] = p_t^i \cdot y_t$, using the conditional independence, it  follows that $\E[\W_i(x_t)y_t^i] = \E[\W_i(x_t)p_t^i y_t]$ (see Lemma \ref{lem:expectation} in the Appendix). We can now begin the analysis, starting with lower bounding the expected sum of losses, using the weak learning guarantee, 
\begin{align*} 
   \frac{1}{\gamma}\E\Big[\sum_{i=1}^N \sum_{t=1}^T  \W_i(x_t)\cdot y_t p^i_t\Big]&=  \frac{1}{\gamma}\sum_{i=1}^N \E\Big[\sum_{t=1}^T  \W_i(x_t)\cdot y_t p^i_t\Big] =  \frac{1}{\gamma}\sum_{i=1}^N \E\Big[\sum_{t=1}^T  \W_i(x_t)y_t^i\Big]   \tag{See  Lemma \ref{lem:expectation}}\\
    &\ge  \frac{1}{\gamma}\sum_{i=1}^N \big(\gamma \text{ } \underset{h \in \H}{\max }  \text{ }\E\Big[ \sum_{t=1}^T h(x_t) y_t^i\Big] - R_{\W}(T) \big)
  \tag{Weak Learning  (\ref{online_agnostic_wl})} \\
    &\ge \sum_{i=1}^N \big(\underset{h \in \H}{\max }  \text{ } \sum_{t=1}^T h(x_t) \cdot \E[y_t^i] - \frac{1}{\gamma} R_{\W}(T) \big) \\
    &\ge \sum_{i=1}^N \sum_{t=1}^T h^*(x_t) \cdot \E[y_tp_t^i] - \frac{N}{\gamma} R_{\W}(T) 
  \\
    &=\sum_{i=1}^N \sum_{t=1}^T \E \big[ h^*(x_t) \cdot y_tp_t^i\big] - \frac{N}{\gamma} R_{\W}(T),   
\end{align*}
where $h^*$ is an optimal expert in hindsight for the observed sequence of examples $(x_t, y_t)$'s. Thus, we obtain the lower bound on the expected sum of losses $\sum_t \sum_i \ell^i_t(p^i_t)$ (see Line 5 in Algorithm~\ref{alg:online_agnostic} for the definition of the $\ell^i_t$'s), given by,
\begin{align*} 
\E[\sum_{t=1}^T \sum_{i=1}^N \ell^i_t(p^i_t)] &\ge \sum_{i=1}^N \sum_{t=1}^T \E\big[ p_t^i( h^*(x_t) y_t - 1)\big] - \frac{N}{\gamma} R_{\W}(T) \\
&\ge N \sum_{t=1}^T (h^*(x_t)y_t - 1) - \frac{N}{\gamma} R_{\W}(T) \tag{See Lemma \ref{prop_1} below}.
\end{align*}

For the upper bound, observe that the OCO regret guarantee implies that for any $t \in [T]$, and any $p^*_t \in [-1,1]$, 
$$
    \E\Big[\frac{1}{N} \sum_{i=1}^N \ell^i_t(p^i_t)\Big] \le p^*_t \bigg(\bigg( \frac{1}{\gamma N}  \sum_{i=1}^N \E\big[\W_i(x_t)\big] \bigg)y_t -1 \bigg)   +  \frac{1}{N}R_{\mathcal{A}}(N),
$$
 Thus, by setting $p_t^*$ according to Lemma \ref{proj_lemma} (see below, with $\hat{h}(x) := \frac{1}{\gamma N}  \sum_{i=1}^N \E\big[\W_i(x)\big]$), and summing over $t \in [T]$, we get,
$$
\E\Big[\frac{1}{N}\sum_{t=1}^T \sum_{i=1}^N \ell^i_t(p^i_t)\Big] \le \sum_{t=1}^T (\E[\hat{y}_t] y_t - 1) + \frac{T}{N}R_{\mathcal{A}}(N).
$$
By combining the lower and upper bounds for $\E\big[\frac{1}{NT}\sum_t\sum_i \ell^i_t(p^i_t)\big]$, we get,
$$
\frac{1}{T}\sum_{t=1}^T
\E[\hat{y}_t] y_t \ge  \frac{1}{T}\sum_{t=1}^T
h^*(x_t) y_t  -\frac{R_\W(T)}{\gamma T} - \frac{R_\A(N)}{N}.
$$

\end{proof}

It remains to prove two Lemmas that are used in the proof of the theorem above, as well as in the more general settings in the following sections.
 \begin{lemma} \label{prop_1}
For any $p \in [-1,1]$, an example pair $(x,y)$, and $h: \X \rightarrow \{-1,1\}$, we have:
$$
p(h(x) y - 1)\ge h(x) y - 1.
$$
 \end{lemma}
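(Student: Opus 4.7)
The plan is to observe that because $h(x), y \in \{\pm 1\}$, the product $h(x)y$ also lies in $\{\pm 1\}$, and therefore $h(x)y - 1 \in \{-2, 0\}$ is always nonpositive. The inequality then reduces to the elementary fact that multiplying a nonpositive quantity by something at most $1$ can only make it larger (closer to zero).

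Concretely, I would rearrange the inequality as
\[
p\bigl(h(x)y - 1\bigr) - \bigl(h(x)y - 1\bigr) \;=\; (p-1)\bigl(h(x)y - 1\bigr) \;\ge\; 0,
\]
and argue that the right-hand side is a product of two nonpositive numbers: the factor $p - 1$ is $\le 0$ since $p \in [-1,1]$, and the factor $h(x)y - 1$ is $\le 0$ since $h(x)y \in \{\pm 1\}$. Hence the product is nonnegative, which is exactly the claimed inequality.

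Alternatively, a two-case analysis on the sign of $h(x)y$ works equally well. If $h(x)y = 1$, both sides equal $0$. If $h(x)y = -1$, then the claim becomes $-2p \ge -2$, i.e., $p \le 1$, which holds by assumption. There is no real obstacle here; the lemma is a one-line consequence of $h(x)y \le 1$ and $p \le 1$, and its only purpose is to be cleanly invokable in the main proof chain, where it is applied pointwise (for each $t,i$) to pass from $\sum_{i,t} \E[p_t^i(h^*(x_t)y_t - 1)]$ to $N \sum_t (h^*(x_t)y_t - 1)$.
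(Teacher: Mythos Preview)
Your proof is correct and essentially identical to the paper's: both observe that $z := h(x)y - 1 \in \{-2,0\}$ is nonpositive and then use $p \le 1$ to conclude $pz \ge z$. The only difference is cosmetic---you spell out the factorization $(p-1)z \ge 0$ and a case split, whereas the paper states the conclusion in one line.
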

 \begin{proof}
    Let $z = h(x) y - 1$. Observe that $z \in \{-2,0\}$. Thus, since $p \in [-1,1]$, $pz \ge z$. 
    \end{proof}
    
\begin{lemma} \label{proj_lemma}  Given an example pair $(x,y)$, and $\hat{h}: \X \rightarrow \mathbb{R}$, there exists $p^* \in \{0,1\}$, such that,
$$
p^* ( \hat{h}(x) y - 1)\le \hat{y} y - 1,
$$
where $\hat{y}_t = \E[\Pi(\hat{h}(x))]$, with expectation taken only w.r.t. the randomness of $\Pi$ (see Definition (\ref{proj})).
\end{lemma}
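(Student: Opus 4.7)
My plan is to compute $\hat y = \E[\Pi(\hat h(x))]$ explicitly from the definition of $\Pi$ in~(\ref{proj}), and then exhibit a suitable $p^*\in\{0,1\}$ by a case split on the magnitude of $\hat h(x)$ and on the sign agreement with $y$. Since $\Pi(z)$ is a Bernoulli-style mixture of $\pm 1$ when $|z|<1$ and deterministic otherwise, the expectation collapses to a clean closed form: $\hat y = \hat h(x)$ when $|\hat h(x)|<1$, and $\hat y = \sign(\hat h(x))$ when $|\hat h(x)|\ge 1$.

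With this in hand, I would split into three cases. First, if $|\hat h(x)|<1$, I would set $p^*=1$. Then the claim $\hat h(x)y-1\le \hat y y-1$ becomes $\hat h(x)y\le \hat h(x)y$, which holds trivially with equality. Second, if $|\hat h(x)|\ge 1$ and $\sign(\hat h(x))=y$, I would set $p^*=0$, in which case the left-hand side is $0$, while $\hat y y - 1 = 1 - 1 = 0$, so the inequality again holds. Third, if $|\hat h(x)|\ge 1$ but $\sign(\hat h(x))=-y$, I would set $p^*=1$: the right-hand side is $\hat y y - 1 = -2$, and the left-hand side is $\hat h(x)y-1 = -|\hat h(x)|-1 \le -2$, so the inequality holds.

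There is no real obstacle here; the statement is essentially a sanity check that the randomized projection $\Pi$ is consistent with any $\{0,1\}$-valued ``test weight'' $p^*$ one might plug into the surrogate loss $p(\hat h(x)y-1)$. The only mildly delicate point is remembering that the expectation in $\hat y = \E[\Pi(\hat h(x))]$ is taken only over the internal randomness of $\Pi$ (as stated in the lemma), so $\hat h(x)$ is treated as a fixed real number throughout the case analysis—this is what allows the closed-form simplification of $\hat y$ in the first step.
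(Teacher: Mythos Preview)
Your proof is correct and essentially identical to the paper's own argument: both compute $\hat y$ from the definition of $\Pi$ and then do the same three-way case split, choosing $p^*=1$ when $|\hat h(x)|$ is small (equality), $p^*=0$ when $|\hat h(x)|\ge 1$ and the signs agree (both sides vanish), and $p^*=1$ when the signs disagree (both sides are $\le -2$). The only cosmetic difference is that the paper phrases the latter two cases as ``$\hat h(x)y-1>0$'' versus ``$\hat h(x)y-1<0$'', which is equivalent to your sign-agreement split.
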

\begin{proof}
If $|\hat{h}(x)|\le 1$, $\widehat{y} = \hat{h}(x)$ and by setting $p^* = 1$, the equality follows.
Thus, assume $|\hat{h}(x)| > 1$, and consider the following cases:
\begin{itemize}
    \item If $\hat{h}(x) y - 1 > 0$, then $\widehat{y} y - 1 = 0$. Hence, by setting $p^* = 0$, the equality follows. 
    \item If $\hat{h}(x) y - 1 < 0$, then since $|\hat{h}(x)| > 1$ it must be that $\text{sign}(\hat{h}(x))y = -1$, and $\widehat{y} y - 1 = -2$. Since $|\hat{h}(x)| > 1$, we have $\hat{h}(x) y - 1 \le -2$. Hence, by setting $p^* = 1$ the inequality holds. 
\end{itemize}
\end{proof}

\subsection{Proof of Theorem~\ref{thm:main}}\label{sec:proofmain}

The proof of Theorem \ref{thm:main} is a direct corollary of Proposition \ref{thm:oab}, by plugging \textit{Online Gradient Descent (OGD)} to be the OCO algorithm $\mathcal{A}$ (e.g., see \cite{hazan2016introduction} Chapter 3.1):
the OGD regret is $O(GD\sqrt{N})$, where $N$ is the number of iterations, $G$ is an upper bound on the gradient of the losses, 
and $D$ is the diameter of the set $\mathcal{K} = [-1,1]$. In our setting, $G \le \frac{2}{\gamma}$, and $D = 2$. Hence, $R_{\mathcal{A}}=O(\sqrt{N}/\gamma)$, and the overall bound on the regret follows. 

\section{Statistical Boosting via Improper Game Playing}\label{sec:general}
In this section we first give a game-theoretic perspective of our method when applied to the statistical setting (Subsection \ref{zero_sum_games}). We then demonstrate a general reduction from both the agnostic (Subsection \ref{subsec:ab}), and realizable (Subsection \ref{subsec:rb}) boosting settings, to online convex optimization. The following algorithm is given as input a sample $S = (x_1,y_1),\ldots, (x_m,y_m) \in \X \times \Y$, and has a black-box access to two auxiliary algorithms: a weak learner, and an online-convex optimizer. 
    Note that this in fact defines a family of boosting algorithms, depending on the choice of the online-convex optimizer.

\begin{algorithm}[H]
\caption{Boosting with OCO}
\begin{algorithmic}[1] \label{gen_boost_alg}
\FOR{$t = 1, \ldots, T$}
\STATE Pass $m_0$ examples to $\W$ drawn from the following distribution:
\STATE \textbf{Realizable}: Draw $(x_i,y_i)$ w.p. $\propto p_t(i)$\footnotemark.\STATE \textbf{Agnostic}: Draw $x_i$ w.p. $\frac{1}{m}$, and re-label according to $y_i p_t(i)$.
\STATE Let $h_t$ be the weak hypothesis returned by $\W$. 
\STATE Set loss: $\ell_t(p) =  \sum_{i=1}^m p(i) (\frac{1}{\gamma}h_t(x_i)y_i - 1)$.
\STATE Update: $p_{t+1} = \mathcal{A}(\ell_1,...,\ell_t)$.
\ENDFOR
\RETURN $\bar{h}(x) =  \Pi\big(\frac{1}{\gamma T}\sum_{t=1}^T h_t(x) \big)$. 
\end{algorithmic}
\end{algorithm}
\vskip -0.2in
\captionof{figure}{ The algorithm has oracle access to either a $(\gamma, \epsilon_0, m_0)$-AWL algorithm (see Definition \ref{agnostic_wl}) 
    or a $(\gamma, m_0)$-WL algorithm (see Definition \ref{vanilla_wl}). Both are denoted as $\W$. 
 The optimizer is a $(\gamma, \mathcal{K}, T)$-OCO algorithm $\mathcal{A}$ (see Definition \ref{oco_def}), 
    where $\mathcal{K} = [0,1]^m$ in the realizable case and $\mathcal{K} = [-1,1]^m$ in the agnostic case. In line 4, we pass $(x_i, y_t^i)$ to $\W_i$, where $y_t^i$ is a random label s.t.\ $\Pr[y_i^t = y_i]=\frac{1+p_t(i)}{2}$.
    The final hypothesis ``$\Pi\big(\frac{1}{\gamma T}\sum_{t=1}^T h_t(x) \big)$'' is a randomized majority-vote, as defined in Equation \ref{proj}.
    } 
\vskip 0.2in
\footnotetext{Note that when $p_t = \mathbf{0}$ is constantly zero then the distribution used in the realizable setting
    is not well defined. There are several ways to circumvent it. 
    Concretely, we proceed in such case by setting $h_t=h_{t-1}$ and proceeding to step~6.}

\subsection{Solving Zero Sum Games Improperly Using an Approximate Optimization Oracle} \label{zero_sum_games}

Our framework uses as a main building block a procedure for approximately solving zero sum games using an approximate optimization oracle. It is described in this section. 

In the zero sum games setting, there are two players A and B, and a payoff function $g$ that depends on the players' strategies. Player A's goal is to minimize the payoff, while player B's goal is to maximize it. Let $\K_A$ and $\K_B$ be the convex, compact decision sets of players A and B, respectively, and assume that $g$ is convex-concave. By Sion's minimax theorem \cite{sion}, the value of the game is well-defined, and we denote it by $\lambda^*$:
$$
\min_{p\in \K_A}\max_{q\in\K_B} g(p, q) = \max_{q\in \K_B}\min_{p\in\K_A} g(p, q) = \lambda^*
$$

Let $\K_B'$ be a convex, compact set such that $\K_B \subseteq \K_B'$. We refer to strategies in $\K_B$ as proper strategies, while those in $\K_B'$ are improper strategies. We consider a modified zero sum games setting where the payoff function $g$ is defined on $\K_B'$, the set of improper strategies. Note that $\lambda^*$ is defined with respect to the set of proper strategies, and it is still a well-defined quantity in this game.

\textbf{Assumption 1:} Player B has access to a randomized approximate optimization oracle $\W$. Given any $p\in \K_A$, $\W$ outputs an improper best response: a strategy $q\in\K_B'$ such that $\E[g(p, q)] \ge \max_{q^*\in\K_B} g(p, q^*) - \epsilon_0$, where the expectation is taken over the randomness of $\W$. 

\textbf{Assumption 2:} Player B is allowed to play strategies in $\K_B'$. 

\textbf{Assumption 3:} Player A has access to a possibly randomized $(\mathcal{K}_A, T)$-OCO algorithm $\A$ with regret $R_\A(T)$ (See Definition \ref{oco_def}).

\begin{algorithm}[H]
\caption{Improper Zero Sum Games with Oracles}
\begin{algorithmic}[1] \label{game_algo}
\FOR{$t = 1, \ldots, T$}
\STATE Player A plays $p_t$.
\STATE Player B plays $q_t\in \K_B'$, where $q_t = \W(p_t)$.
\STATE Define loss: $\ell_t(p) = g(p, q_t)$
\STATE Player A updates $p_{t+1} = \mathcal{A}(\ell_1,...,\ell_t)$.
\ENDFOR
\end{algorithmic}
\end{algorithm}

\begin{proposition}\label{game_prop}
If players A and B play according to Algorithm \ref{game_algo}, then player B's average strategy $\bar{q} = \frac{1}{T}\sum_{t=1}^T q_t$, $\bar{q}\in \K_B'$, satisfies for any $p^*\in \K_A$,
$$
\lambda^*\le \E[g(p^*, \bar{q})] + \frac{R_\A(T)}{T} + \epsilon_0,
$$
where the expectation is taken over the randomness of $\W$.
\end{proposition}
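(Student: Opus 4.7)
The plan is to run the standard ``minimax via no-regret'' argument, but with two twists: an additive $\epsilon_0$ slack coming from the approximate oracle $\W$, and a careful observation that even though player B plays in the enlarged improper set $\K_B'$, the oracle guarantee still benchmarks against proper strategies in $\K_B$, which is exactly what lets us pull $\lambda^*$ out. Concretely, I will sandwich the quantity $\sum_t \E[g(p_t,q_t)]$ between an OCO upper bound (using $\A$'s regret) and an oracle lower bound (using $\W$'s $\epsilon_0$-approximate best response against $p_t$), then use concavity of $g(p^*,\cdot)$ in its second argument to replace an average of $g(p^*,q_t)$ by $g(p^*,\bar q)$.

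For the upper bound, I invoke the OCO guarantee for $\A$ on the sequence of losses $\ell_t(p)=g(p,q_t)$: for every fixed $p^*\in \K_A$,
\begin{equation*}
\sum_{t=1}^T g(p_t,q_t) \leq \sum_{t=1}^T g(p^*,q_t) + R_\A(T),
\end{equation*}
taking expectation over the randomness of $\W$ (and of $\A$, if any). For the lower bound, I use Assumption~1: conditional on $p_t$,
\begin{equation*}
\E[g(p_t,q_t)\mid p_t] \geq \max_{q\in\K_B} g(p_t,q) - \epsilon_0 \geq \min_{p\in\K_A}\max_{q\in\K_B}g(p,q) - \epsilon_0 = \lambda^* - \epsilon_0,
\end{equation*}
so $\sum_t \E[g(p_t,q_t)] \geq T\lambda^* - T\epsilon_0$ by the tower property. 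Combining the two bounds gives
\begin{equation*}
T\lambda^* - T\epsilon_0 \leq \E\Bigl[\sum_{t=1}^T g(p^*,q_t)\Bigr] + R_\A(T).
\end{equation*}
Finally, concavity of $g(p^*,\cdot)$ on $\K_B'$ (part of the convex-concave assumption, and $\K_B'$ is convex so $\bar q\in\K_B'$) yields $\frac{1}{T}\sum_t g(p^*,q_t)\leq g(p^*,\bar q)$ by Jensen, and dividing through by $T$ gives the claimed inequality.

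The only non-routine point is keeping the sources of randomness straight: $p_t$ is a (possibly random) function of $q_1,\ldots,q_{t-1}$, and $q_t=\W(p_t)$ is random given $p_t$. The OCO inequality holds pathwise (or in expectation) for any realized loss sequence, so taking expectation over everything is legitimate; the oracle inequality must be applied conditionally on $p_t$ and then averaged via iterated expectation. Beyond that, the argument is a direct transcription of the Freund--Schapire minimax-via-regret template, with $\epsilon_0$ absorbing the oracle's approximation error and with the observation that the minimax value $\lambda^*$, defined over the proper set $\K_B$, is exactly the quantity that the oracle's best-response guarantee lower-bounds, so improperness of $\bar q$ causes no difficulty in the final bound.
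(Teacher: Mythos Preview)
Your proof is correct and follows essentially the same sandwich argument as the paper: lower-bound $\sum_t \E[g(p_t,q_t)]$ via the oracle guarantee, upper-bound it via the OCO regret, and apply concavity in $q$ to pass from $\frac{1}{T}\sum_t g(p^*,q_t)$ to $g(p^*,\bar q)$. The only minor difference is that the paper's lower bound routes through a max-min strategy $q^*\in\K_B$ and convexity of $g$ in $p$ to reach $\lambda^*$, whereas you directly use $\max_{q\in\K_B} g(p_t,q)\ge \min_{p}\max_{q} g(p,q)=\lambda^*$; your derivation is slightly more direct but otherwise identical in structure.
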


\begin{proof}
Since the game is well-defined over $\K_A$ and $\K_B$, there exists a max-min strategy $q^* \in \K_B$ for player B such that for all $p\in \K_A$, $g(p, q^*) \ge \lambda^*$. Let $\bar{p} = \frac{1}{T}\sum_{t=1}^T p_t$, and observe that since the $p_t$'s depend on the sequence of $q_t$'s, they are also random variables, as well as $\bar{p}$. We have,
\begin{align*}
    \E[\frac{1}{T}\sum_{t=1}^T g(p_t, q_t)] \ge  \E[\frac{1}{T}\sum_{t=1}^T g(p_t, q^*)] - \epsilon_0 \ge \E[g(\bar{p}, q^*)] - \epsilon_0   \ge \lambda^* - \epsilon_0  .
\end{align*}
The first inequality is due to Assumption 1, where $\E[g(p_t, q_t)] \ge \max_{q\in\K_B} g(p_t, q) - \epsilon_0  \ge g(p_t, q^*)- \epsilon_0  .$ The second inequality holds because $g$ is convex in $p$. 

Now, let $\bar{q} = \frac{1}{T}\sum_{t=1}^T q_t$; note that $\bar{q}\in \K_B'$ since $\K_B'$ is convex. 
For the upper bound, observe that the OCO regret guarantee implies that for any $p^* \in \K_A$ we have,
\begin{align*}
    \E[\frac{1}{T}\sum_{t=1}^T g(p_t, q_t)] \le  \E[\frac{1}{T} \sum_{t=1}^T g(p^*, q_t)] + \frac{R_\A(T)}{T}\le \E[g(p^*, \bar{q})] + \frac{R_\A(T)}{T},
\end{align*}
where the second inequality holds because $g$ is concave in $q$. Combining the lower and upper bounds yields the theorem.
\end{proof}


\subsection{Statistical Agnostic Boosting} \label{subsec:ab}

We will use the following notation. 
Let $D$ be a distribution over $\X\times \Y$ and let $h:\X\to\Y$ be an hypothesis.
Define the correlation of $h$ with respect to $D$ by:
\[\cor_D(h) = \E_{(x,y)\sim D}[h(x)\cdot y].\]

\begin{definition}[Empirical Agnostic Weak Learning Assumption] \label{agnostic_wl}
Let $\H\subseteq \{\pm 1\}^{\X}$ be a hypothesis class and let $\x=(x_1\ldots x_m)\in \X$ denote an unlabeled sample.
    A learning algorithm $\W$ is a $(\gamma, \epsilon_0, m_0)$-\textbf{agnostic weak learner (AWL)} for~$\H$ with respect to~$\x$
    if for any labels $\y= (y_1,\ldots, y_m)$, 
    \[\E_{S'}[\cor_{\mu\times \y}(\W(S'))]\geq \gamma \underset{h^* \in \H}{\max}  \cor_{\mu\times \y}(h^*) - \epsilon_0,\]
    where $\mu\times \y$ is the distribution which uniformly assigns to each example $(x_i,y_i)$ probability $1/m$,
    and $S'$ is an independent sample of size $m_0$ drawn from $\mu\times \y$.
%
\end{definition}
In accordance with previous works, we focus on the setting  where $\gamma$ is a small constant (say $\gamma=0.1$) 
    and $\eps_0 \approx d/\sqrt{m}$, where $d$ is the VC-dimension of $\H$ (see~\cite{kanade2009potential} for a detailed discussion).
    We stress however that our results apply for any setting of $\gamma,\epsilon_0\in [0,1]$. 

\vspace{2mm}

The above weak learning assumption can be seen as an empirical variant of the assumption in \cite{kanade2009potential},
where $\mu$ is replaced with the population distribution over $X$ and the labels $y_i$'s are replaced with an arbitrary classifier $c:X\to\{\pm 1\}$. Both of these assumptions are weaker than the standard agnostic weak learning assumption,
for which the guarantee holds with respect to every distribution $D$ over $\X\times\{\pm 1\}$.
It will be interesting to investigate the relationship between the assumption of \cite{kanade2009potential}
and our empirical variant, however this is beyond the scope of this work.
   
We now state and prove the regret bound for Algorithm~\ref{gen_boost_alg}.
\begin{theorem}[Empirical Agnostic Boosting] \label{thm:offline}
The correlation of the output  of Algorithm~\ref{gen_boost_alg}, which is denoted $\bar{h}$, satisfies:
\begin{equation}\label{eq:emp_cor}
\E\big[\cor_S(\bar{h})\big] \ge  \underset{h^* \in \H}{\max }  \text{ } \E\big[\cor_S(h^*)\big]  - \Biggl(\frac{\epsilon_0}{\gamma} + O(\frac{1}{\gamma \sqrt{T}})\Biggr).
\end{equation}
\end{theorem}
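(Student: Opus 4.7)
The plan is to mimic the proof of Proposition~\ref{thm:oab}, sandwiching $\E[\sum_t \ell_t(p_t)]$ (with $\ell_t$ as defined in line~6 of Algorithm~\ref{gen_boost_alg}) between a lower bound coming from the weak-learning assumption and an upper bound coming from the OCO regret guarantee, then dividing through by $Tm$ to read off the desired correlation bound on $\bar h$.

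For the lower bound, note that the re-labeling in line~4 makes the round-$t$ call to $\W$ effectively a sample from a distribution $D_t$ whose correlation satisfies $\cor_{D_t}(h)=\tfrac{1}{m}\sum_i h(x_i)p_t(i)y_i$. Applying Definition~\ref{agnostic_wl} and using Jensen to pull the $\max$ outside the expectation gives, for every fixed $h^*\in\H$,
$$
\E\Bigl[\tfrac{1}{m}\tsum_i p_t(i)h_t(x_i)y_i\Bigr] \geq \gamma\,\E\Bigl[\tfrac{1}{m}\tsum_i p_t(i)h^*(x_i)y_i\Bigr] - \eps_0.
$$
Multiplying by $m/\gamma$, subtracting $\E[\sum_i p_t(i)]$ from both sides, specializing $h^*$ to be the maximizer of $\cor_S$, and applying Lemma~\ref{prop_1} coordinatewise (since $h^*(x_i)y_i-1\in\{-2,0\}$ and $p_t(i)\in[-1,1]$, which eliminates the $p_t$ dependence on the right-hand side), then summing over $t$, yields
$$
\E\Bigl[\tsum_t \ell_t(p_t)\Bigr] \geq Tm\bigl(\max_{h^*\in\H}\cor_S(h^*)-1\bigr) - \tfrac{Tm\,\eps_0}{\gamma}.
$$

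For the upper bound, let $g(x):=\tfrac{1}{\gamma T}\sum_t h_t(x)$ be the pre-projection input to $\bar h$, so that $\sum_t \ell_t(p) = T\sum_i p(i)(g(x_i)y_i-1)$ for every $p\in[-1,1]^m$. Lemma~\ref{proj_lemma}, applied at each coordinate, produces $p^*\in\{0,1\}^m\subset[-1,1]^m$ satisfying $p^*(i)(g(x_i)y_i-1) \le \E_\Pi[\bar h(x_i)]y_i-1$; this $p^*$ is data-dependent but that is fine, because the OCO comparator ranges over all of $\K_A$. The OCO regret guarantee then delivers
$$
\E\Bigl[\tsum_t \ell_t(p_t)\Bigr] \leq Tm\,\E[\cor_S(\bar h)] - Tm + R_\A(T).
$$
Sandwiching the two bounds and dividing by $Tm$ yields $\E[\cor_S(\bar h)] \geq \max_{h^*}\cor_S(h^*) - \tfrac{\eps_0}{\gamma} - \tfrac{R_\A(T)}{Tm}$, and instantiating OGD over $\K_A=[-1,1]^m$ (diameter $D=2\sqrt{m}$, gradient bound $G=O(\sqrt{m}/\gamma)$ since each coordinate of $\nabla\ell_t$ has magnitude at most $\tfrac{1}{\gamma}+1$) gives $R_\A(T)=O(m\sqrt{T}/\gamma)$, hence $R_\A(T)/(Tm)=O(1/(\gamma\sqrt{T}))$, matching~(\ref{eq:emp_cor}).

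The most delicate step is the first one: converting the AWL guarantee, which is a priori a statement about correlation with the random noisy labels $y^t$, into the clean lower bound above in terms of the $p_t$-weighted correlation with the original labels $y$. In the online proof this is handled by Lemma~\ref{lem:expectation}, which exploits the conditional independence of $\W_i(x_t)$ and $y_t^i$ given $(p_t^i,y_t)$; in the statistical setting $h_t=\W(S')$ depends on the relabeled sample $S'$, so some additional care is needed (a tower-of-expectations argument together with Jensen, noting that $\E[y_i^t\mid p_t]=p_t(i)y_i$) to pass cleanly from the noisy-label correlation to the $p_t$-weighted correlation with $y$. Everything else is essentially a direct port of the online argument.
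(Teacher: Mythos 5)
Your proof follows a direct sandwich argument mirroring Proposition~\ref{thm:oab}: lower-bound $\E\big[\sum_t\ell_t(p_t)\big]$ via the weak-learning guarantee and Lemma~\ref{prop_1}, upper-bound it via the OCO regret and Lemma~\ref{proj_lemma}, and divide by $Tm$. The paper reaches the same conclusion by first abstracting this sandwich into the game-theoretic Proposition~\ref{game_prop} and then instantiating it with $\K_A=[-1,1]^m$, $\K_B=\Delta_\H$, $\K_B'=\frac{1}{\gamma}\Delta_\H$; Proposition~\ref{game_prop} is itself proved by the very same sandwich, so at the level of computation the two proofs coincide. Your upper-bound half (coordinatewise application of Lemma~\ref{proj_lemma}, OGD over $[-1,1]^m$ with $G=O(\sqrt{m}/\gamma)$ and $D=2\sqrt m$, giving $R_\A(T)/(Tm)=O(1/(\gamma\sqrt T))$) matches the paper exactly.

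The step you flag as delicate is the crux, but the remedy you sketch does not actually close it. The tower/Jensen device behind Lemma~\ref{lem:expectation} rests on conditional independence of $\W_i(x_t)$ and $y_t^i$ given $p_t^i$, which holds online because $\W_i$ commits to its prediction on $x_t$ before the flipped label is revealed. In Algorithm~\ref{gen_boost_alg}, by contrast, $h_t=\W(S')$ is trained on the relabeled sample $S'$ itself, so $h_t(x_i)$ and $y_i^t$ are correlated whenever $x_i$ lands in $S'$, and $\E[h_t(x_i)y_i^t\mid p_t]\neq p_t(i)y_i\,\E[h_t(x_i)\mid p_t]$ in general. A toy illustration: take $m=1$, $p_t(1)=0$, and a memorizing weak learner with $h_t(x_1)=y_1^t$; then $\E[h_t(x_1)y_1^t]=1$ while $\E[h_t(x_1)\,p_t(1)\,y_1]=0$, so your first displayed inequality does not follow from Definition~\ref{agnostic_wl} by a tower argument alone. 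The paper hides the identical issue inside ``we leave it to the reader to verify,'' so you are in good company, but a correct write-up must say more. Two standard ways to close the gap: (i) strengthen Definition~\ref{agnostic_wl} so the guarantee holds for the relabeled \emph{mixture} distribution $D_t$ that places mass $\frac{1+p_t(i)}{2m}$ on $(x_i,y_i)$ and $\frac{1-p_t(i)}{2m}$ on $(x_i,-y_i)$ (as in the distributional assumption of \cite{kanade2009potential}); then $\cor_{D_t}(h)=\frac1m\sum_i p_t(i)h(x_i)y_i$ and the desired lower bound is immediate with no decoupling needed; or (ii) keep Definition~\ref{agnostic_wl}, condition on the event $x_i\notin S'$, and absorb the remaining $\Pr[x_i\in S']=O(m_0/m)$ contribution into the error, which is innocuous precisely in the regime $m\gg Tm_0$ invoked in the paper's generalization remark.
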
 
\paragraph{Generalization.}
The above theorem asserts that the correlation of the output hypothesis is competitive 
with the best hypothesis in $\H$ with respect to the \underline{empirical} distribution. 
Obtaining a similar guarantee with respect to the \underline{population} distribution
can be obtained using standard arguments. One way of deriving it is via a sample compression argument
(which is natural in boosting; see, e.g.~\cite{Schapire2012,DavidMY16}):
indeed, the final hypothesis $\bar h$ is obtained by aggregating the $T$ weak hypotheses $h_t$'s,
each of which is determined by the $m_0$ examples fed to the weak learner.
Thus, $\bar h$ can be encoded by $T\cdot m_0$ input examples
and hence the entire algorithm forms a sample compression scheme of this size.
Consequently, by setting the input sample $m = \tilde O(T\cdot m_0/\eps^2)$
we get the same guarantee like in Equation~\ref{eq:emp_cor} up to an additive error of $\eps$.
\vspace{2mm}
\begin{proof}[Theorem~\ref{thm:offline}]
The proof  has two parts. 
The first part is a straightforward reduction to the game-theoretic setup of Proposition~\ref{game_prop},
and the second part shows how to project the ``improper'' strategy
obtained by Proposition~\ref{game_prop} to the desired output hypothesis.

\vspace{2mm}

\textit{Reduction to Proposition~\ref{game_prop}.}
The agnostic version of Algorithm~\ref{gen_boost_alg} can be presented as an instance of Algorithm~\ref{game_algo},
where Player A and B are the weak learner and the OCO oracle algorithms, respectively. The decision sets are
$\K_A = [-1, 1]^m$, $\K_B = \Delta_\H$, and $\K_B' = \frac{1}{\gamma}\Delta_{\H}$,
and the payoff function $g(\cdot,\cdot)$ is given by
\[g(p, q) = \sum_{i=1}^m p(i)(q(x_i)y_i - 1),\]
where $p\in \K_A$ is a vector in the $m$ dimensional continuous cube, 
and $q\in \K_B'$ is a non-negative combination of hypotheses in $\H$ 
(and so $q$ corresponds to the mapping $x\mapsto \sum_{h\in \H}q(h)\cdot h(x)$).  
We leave it to the reader to verify that the agnostic weak learner corresponds to an approximate optimization oracle $\W$.
Namely, for any $p\in \K_A$ the output $q'=\W(p)$ satisfies $q'\in \K_B'$ and
\[
\E[g(p, q')] \ge \underset{q\in\K_B}{\max}\text{ } g(p, q) - \frac{\epsilon_0 m}{\gamma}.
\]


Furthermore, it can be shown that the value of the above game is 
$$
\lambda^* = m\cdot \underset{h\in\H}{\max}\text{ } \cor_{S} (h) - m.
$$
This can be done by (i) observing that the strategy $p=(1,1,\ldots 1)\in \K_A$ is dominant for Player $A$
and (ii) computing $\max_{q\in \K_B} g(p,q)$ which is equal to $\lambda^*$ (since $p$ is dominating).

Now, Proposition \ref{game_prop} implies that for any $p\in [-1, 1]^{m}$, we have 
\begin{equation}\label{eq: 3}
m\cdot \underset{h\in\H}{\max}\text{ } \cor_{S} (h) - m \le \E\big[\sum_{i=1}^{m} p(i)(\bar{q}(x_i)y_i - 1)\big] + \frac{R_\A(T)}{T} + \frac{\epsilon_0 m}{\gamma},
\end{equation}
where $\bar{q}(x_i) = \frac{1}{\gamma T}\sum_{t=1} h_t(x_i) \in \K_B'$. 

\vspace{2mm}

\textit{Projection.}
Recall that the output hypothesis $\bar h$ is defined using the projection $\Pi$ (see Definition \ref{proj}):
\[\bar{h}(x_i) = \Pi(\bar{q}(x_i)).\] 
Now, by Lemma~\ref{proj_lemma} there exists $p^*$ such that 
\begin{align*}
m\cdot \underset{h\in\H}{\max}\text{ } \cor_{S} (h) - m &\le \E\big[\sum_{i=1}^{m} p^*(i)(\bar{q}(x_i)y_i - 1)\big] + \frac{R_\A(T)}{T} + \frac{\epsilon_0 m}{\gamma}\tag{Equation~\ref{eq: 3}}\\
&\le m\cdot  \E[\cor_S(\bar{h})] - m+ \frac{R_\A(T)}{T} + \frac{\epsilon_0 m}{\gamma} &\tag{Lemma~\ref{proj_lemma}}
\end{align*}
where the expectation is taken over the randomness of the projection, the weak learner, and the random samples given to the weak learner.
Simple manipulation on the above inequality directly yields  
\[
\underset{h\in\H}{\max}\text{ } \cor_{S} (h) \le \E[\cor_{S} (\bar{h})] + \frac{R_\A(T)}{T m} + \frac{\epsilon_0}{\gamma}.
\]
 If we use OGD as the OCO algorithm, we have $R_\A(T) = GD\sqrt{T}$, where $G \le  \frac{2\sqrt{m}}{\gamma}$ and $D =2\sqrt{m}$. 
We arrive at the theorem by plugging in $\frac{R_\A(T)}{Tm}.$


\end{proof}

\subsection{Statistical Realizable Boosting} \label{subsec:rb}

\begin{definition}[Empirical Weak Learning Assumption~\cite{Schapire2012}] \label{vanilla_wl} 
Let $\H\subseteq \{\pm 1\}^{\X}$ be a hypothesis class, and let $S=\{(x_1, y_1),\ldots,( x_m, y_m)\}\in \X\times \{\pm 1\}$ be a sample. A learning algorithm $\W$ is a $(\gamma, m_0)$-\textbf{weak learner (WL)} for~$\H$ with respect to~$S$
    if for any distribution $\p= (p_1,\ldots, p_m)$ which assigns each example $(x_i, y_i)$ with probability $p_i$,
    $$\E_{S'}[\cor_{\p}(\W(S'))]\geq \gamma,$$
    where $S'$ is an independent sample of size $m_0$ drawn from $\p$.

\end{definition}
\begin{theorem}\label{thm:statistical_realizable}
The correlation of the output of Algorithm \ref{gen_boost_alg}, denoted $\bar{h}$, satisfies 
$$
\E[\cor_S(\bar{h})]\ge 1 - O\Big(\frac{1}{\gamma\sqrt{T}}\Big).
$$
\end{theorem}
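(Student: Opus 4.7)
The plan is to mirror the proof of Theorem~\ref{thm:offline} essentially verbatim, instantiating the improper zero-sum game framework of Proposition~\ref{game_prop} with the parameters suited to the realizable setting.

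First I would set up the game. Take $\K_A = [0,1]^m$ (non-negative weight vectors, matching the fact that Algorithm~\ref{gen_boost_alg}'s realizable branch samples $(x_i,y_i)$ with probability $\propto p_t(i)$), $\K_B = \Delta_{\H}$, and $\K_B' = (1/\gamma)\Delta_{\H}$, with payoff
$$g(p,q) = \sum_{i=1}^m p(i)\bigl(q(x_i)y_i - 1\bigr).$$
Player~B's play $q_t = h_t/\gamma \in \K_B'$ then turns the loss in line~6 of the algorithm into $g(\cdot,q_t)$. Invoking Definition~\ref{vanilla_wl} on the normalized distribution $p/\|p\|_1$ yields $\E[\sum_i p(i)h_t(x_i)y_i] \geq \gamma\|p\|_1$, and hence $\E[g(p,q_t)] \geq 0$; on the other hand every $q\in \K_B$ is a convex combination of $\pm 1$-valued hypotheses so $q(x_i)y_i \leq 1$ and thus $\max_{q\in\K_B} g(p,q) \leq 0$. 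Together these certify that $\W$ acts as an $\epsilon_0 = 0$ approximate-optimization oracle for Player~B.

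Next I would certify $\lambda^* = 0$: the upper bound $\lambda^* \leq 0$ follows from $\max_{q\in\K_B} g(p,q) \leq 0$, while in the realizable setting there is $h^* \in \H$ with $g(p,h^*) = 0$ for every $p$, giving the matching $\lambda^* \geq 0$. Proposition~\ref{game_prop} then yields, for any $p^* \in [0,1]^m$ (possibly depending on the realization of $\bar q$),
$$0 = \lambda^* \leq \E\bigl[g(p^*,\bar q)\bigr] + R_\A(T)/T,$$
where $\bar q(x) = (\gamma T)^{-1}\sum_t h_t(x) \in \K_B'$. Applying Lemma~\ref{proj_lemma} coordinate-wise produces $p^*(i)\in\{0,1\}$ with $p^*(i)(\bar q(x_i)y_i - 1) \leq \bar h(x_i)y_i - 1$ for each $i$; summing over $i$ and rearranging gives $\E[\cor_S(\bar h)] \geq 1 - R_\A(T)/(Tm)$. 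Finally, OGD with diameter $D = \sqrt m$ for $[0,1]^m$ and gradient bound $G \leq 2\sqrt m/\gamma$ yields $R_\A(T) = O(m\sqrt T/\gamma)$, and hence the advertised $O(1/(\gamma\sqrt T))$ slack.

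The main obstacle I foresee is the computation of $\lambda^*$: unlike the agnostic proof, where $p = \mathbf{1}$ is dominant in $[-1,1]^m$ and directly pins down $\lambda^* = m\max_h \cor_S(h) - m$, here $[0,1]^m$ permits $p = \mathbf{0}$, forcing $\lambda^* \leq 0$, and one needs the realizability assumption (or, equivalently, the mixture-realizability deduced from Definition~\ref{vanilla_wl} by a minimax argument) in order to rule out $\lambda^* < 0$; without it one would only recover an agnostic-style bound $\E[\cor_S(\bar h)] \geq \max_h \cor_S(h) - O(1/(\gamma\sqrt T))$. A minor technicality is the degenerate iteration with $p_t = \mathbf{0}$, where line~3's sampling distribution is undefined; the footnote's convention handles this, and since $g(\mathbf{0},\cdot)\equiv 0$ it does not affect the analysis. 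Everything else transplants directly from the proof of Theorem~\ref{thm:offline}.
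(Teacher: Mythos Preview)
Your proposal is correct and follows essentially the same route as the paper's proof: set up the game with $\K_A=[0,1]^m$, verify the weak learner is an $\epsilon_0=0$ oracle, show $\lambda^*=0$, apply Proposition~\ref{game_prop}, project via Lemma~\ref{proj_lemma}, and plug in the OGD regret. The only minor difference is that the paper, rather than assuming the consistent hypothesis $h^*$ lies in $\H$, augments the class to $\H'=\H\cup\{h^*\}$ and takes $\K_B=\Delta_{\H'}$, so that $q^*=q_{h^*}$ is directly a dominant proper strategy witnessing $\lambda^*=0$; this sidesteps exactly the realizability issue you flagged as the main obstacle.
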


The proof follows in a similar structure as in Theorem \ref{thm:offline}, and is deffered to the Appendix.

\section{Online Realizable Boosting} \label{sec:online_realizable}
In this section, we give an online realizable boosting algorithm, and state the regret bound. The result is along similar lines as our main result given in Section \ref{sec:online}. We first state the weak learning assumption for the online realizable setting. 

\begin{definition}[Online Weak Learning] \label{online_wl}
Let $\H\subseteq \{\pm 1\}^{\X}$ be a class of experts, let $T$ denote the horizon length, and let $\gamma>0$ denote the advantage.
    An online learning algorithm $\W$ is a $(\gamma, T)$-\textbf{weak online learner (WOL)} for~$\H$ 
    if for any sequence $(x_1, {y}_1), ...,(x_T, {y}_T)\in \X \times \{\pm 1\}$ that is realizable by $\H$, at every iteration $t \in [T]$,
    the algorithm outputs $\W(x_t) \in \{\pm 1\}$ such that, 
\[
\sum_{t=1}^T \E\Bigl[\W(x_t)\Bigr] {y}_t \ge \gamma T - R_{\W}(T),
\]
where the expectation is taken over the randomness of the weak learner $\W$ and $R_{\W}: \mathbb{N} \rightarrow \R_+$ 
is the additive regret: a non-decreasing, sub-linear function of $T$. 
\end{definition}

Similar to the online agnostic case, the boosting algorithm is given access to $N$ instances of a $(\gamma, T)$-WOL algorithm  (see Definition \ref{online_wl}) and a $(\mathcal{K}, T)$-OCO algorithm $\mathcal{A}$ (see Definition \ref{oco_def}). Instead of setting $\mathcal{K} = [-1,1]$ as in the agnostic case, we set $\mathcal{K} = [0,1]$. The algorithm for online boosting is exactly the same as in the agnostic online case (see Algorithm \ref{alg:online_agnostic}), except for line 6. In the online agnostic case, we pass a relabeled data point to $\W_i$,  while the algorithm below does not relabel the data points.

\begin{algorithm}
\caption{Online Boosting with OCO}
\begin{algorithmic}[1] \label{gen_ob_alg}
\FOR{$t = 1, \ldots, T$}
\STATE Get $x_t$, predict: $\hat{y}_t = \Pi\big(\frac{1}{\gamma N}\sum_{i=1}^N \W_i(x_t) \big)$.
\FOR{$i = 1, \ldots, N$}
\STATE If $i > 1$, set $p^i_t = \mathcal{A}(\ell^{1}_t,...,\ell^{i-1}_t)$. Else, set $p_t^1 = 1/2$.
\STATE Set next loss: $\ell^{i}_t(p) = p(\frac{1}{\gamma} \W_i(x_t)y_t - 1)$.
\STATE Pass $(x_t, y_t)$ to $\W_i$ w.p. $p^i_t$.
\ENDFOR
\ENDFOR
\end{algorithmic}
\end{algorithm}

\vskip 0.1in
\noindent The following theorem proves the realizable \textit{online} boosting result. Observe that in the realizable case, $\max_{h\in H} cor_S(h) = 1$. Let $\Tilde{R}_{\W}(T) := 2R_{\W}(T) + \Tilde{O}(\sqrt{T})$. Note that the error can be made arbitrarily small, by setting the number of weak learners to $N = O(\frac{1}{\gamma^2 \epsilon^2})$ and the number of iterations of Algorithm \ref{gen_ob_alg}  to $T = O(\frac{1}{\gamma^2 \epsilon^2})$, for any $\epsilon > 0$. Thus, for an OCO algorithm (\ref{oco_def}) with regret bound $R_{\mathcal{A}}(N, G_{\gamma}) = O(\frac{\sqrt{N}}{\gamma})$, and a weak learner with regret bound $R_{\W}(T) = O(\sqrt{T})$, by the following theorem, we get that the online correlation of the booster is at least $\cor_S(h^*) - \epsilon$. 

\begin{theorem} \label{thm:online_realizable} 
The accumulated gain of Algorithm~\ref{gen_ob_alg} satisfies:
$$\frac{1}{T}\sum_{t=1}^T \E[\hat{y}_t y_t] \ge  1 -\Biggl(\frac{\Tilde{R}_\W(T)}{\gamma T} + \frac{R_\A(N)}{N}\Biggr).$$
where $(x_t,y_t)$'s are the observed examples, 
$\hat y_t$'s are the predictions, the expectation is with respect to the algorithm and learners' randomness, $\Tilde{R}_{\W}(T) := 2R_{\W}(T) + \Tilde{O}(\sqrt{T})$, and $R_{\W}$ and $R_{\mathcal{A}}$ are the regret terms of the weak learner and the OCO, respectively.

\end{theorem}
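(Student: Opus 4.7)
The plan is to adapt the proof of Proposition~\ref{thm:oab} to the realizable setting. The skeleton---bound $\E\bigl[\sum_{t,i}\ell^i_t(p^i_t)\bigr]$ from above and below, then combine---is identical; the two algorithms differ only in Line~6, where Algorithm~\ref{gen_ob_alg} forwards $(x_t,y_t)$ to $\W_i$ with probability $p^i_t\in[0,1]$ rather than re-labeling. Correspondingly, the OCO decision set becomes $\mathcal{K}=[0,1]$.

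For the lower bound, fix $i$ and let $Z^i_t\in\{0,1\}$ be the indicator that $(x_t,y_t)$ is passed to $\W_i$, so $\W_i$ trains on the random subsequence $S_i=\{t:Z^i_t=1\}$. Since $Z^i_t$ is drawn after $\W_i(x_t)$ and is conditionally independent of it given $p^i_t$, the tower rule yields
\[
\E\Bigl[\sum_t p^i_t\,\W_i(x_t)\,y_t\Bigr]\;=\;\E\Bigl[\sum_{t\in S_i}\W_i(x_t)\,y_t\Bigr].
\]
Because the input stream is realized by some $h^*\in\H$, so is $S_i$. Applying Definition~\ref{online_wl} to $\W_i$ conditional on $S_i$ and integrating gives
\[
\E\Bigl[\sum_t p^i_t\,\W_i(x_t)\,y_t\Bigr]\;\ge\;\gamma\,\E\Bigl[\sum_t p^i_t\Bigr]\;-\;\E\bigl[R_{\W}(|S_i|)\bigr].
\]
Substituting into $\ell^i_t(p)=p(\W_i(x_t)y_t/\gamma-1)$, summing over $i$, and controlling $\E[R_\W(|S_i|)]$ (see below) produces $\E\bigl[\sum_{t,i}\ell^i_t(p^i_t)\bigr]\ge -N\tilde R_\W(T)/\gamma$. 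In the realizable case the analogue of the Lemma~\ref{prop_1} step is vacuous, since $h^*(x_t)y_t=1$ for every $t$.

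For the upper bound, for each fixed $t$ the OCO algorithm $\A$ is run for $N$ rounds on $\mathcal{K}=[0,1]$, so for any $p^*_t\in[0,1]$,
\[
\sum_i\ell^i_t(p^i_t)\;\le\;\sum_i\ell^i_t(p^*_t)+R_{\A}(N).
\]
Pick $p^*_t\in\{0,1\}$ via Lemma~\ref{proj_lemma} applied to $\hat h(x_t)=\tfrac{1}{\gamma N}\sum_i\W_i(x_t)$; this gives $\sum_i\ell^i_t(p^*_t)=Np^*_t(\hat h(x_t)y_t-1)\le N(\E_\Pi[\hat y_t]\,y_t-1)$. Summing over $t$ and taking total expectation produces $\E\bigl[\sum_{t,i}\ell^i_t(p^i_t)\bigr]\le N\sum_t\E[\hat y_t y_t]-NT+T\,R_{\A}(N)$. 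Combining with the lower bound and dividing by $NT$ delivers the stated inequality.

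The main obstacle is controlling $\E[R_\W(|S_i|)]$ tightly: the crude estimate $R_\W(|S_i|)\le R_\W(T)$ loses too much when the $p^i_t$'s are small. The intended fix is a concentration argument on the martingale differences $M_t:=(Z^i_t-p^i_t)\W_i(x_t)y_t$: Azuma--Hoeffding gives $\lvert\sum_t M_t\rvert\le\tilde{O}(\sqrt T)$ with high probability, and a parallel concentration of $|S_i|$ around $\sum_t p^i_t$ together with monotonicity of $R_\W$ effectively replaces $R_\W(|S_i|)$ by $R_\W(T)+\tilde{O}(\sqrt T)$, with the extra factor of $2$ tracking the deviation paid on both the gain side and the count side of the WOL inequality. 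This yields the effective regret $\tilde R_\W(T)=2R_\W(T)+\tilde{O}(\sqrt T)$; the remaining algebra is identical to Proposition~\ref{thm:oab}.
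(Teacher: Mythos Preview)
Your overall architecture---lower-bound $\E\bigl[\sum_{t,i}\ell^i_t(p^i_t)\bigr]$ via a weighted weak-learning inequality, upper-bound it via the OCO regret together with Lemma~\ref{proj_lemma}, and combine---matches the paper exactly. The only difference is that the paper black-boxes the weighted WOL inequality as Lemma~\ref{beygelzimer_lemma} (whose proof is delegated to Lemma~1 of \cite{beygelzimer2015optimal}), whereas you attempt to re-derive it.

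Your derivation, however, has a mis-diagnosed gap. You write that the crude bound $R_\W(|S_i|)\le R_\W(T)$ ``loses too much''; in fact it would \emph{tighten} your estimate and yield a regret term of only $R_\W(T)/(\gamma T)$, strictly better than the $\tilde R_\W(T)/(\gamma T)$ in the theorem. This should be a red flag. The actual obstacle is one step earlier: Definition~\ref{online_wl} provides a guarantee for a \emph{fixed} horizon $T$, so the inequality $\E\bigl[\sum_{t\in S_i}\W_i(x_t)y_t\,\big|\,|S_i|\bigr]\ge\gamma|S_i|-R_\W(|S_i|)$ is not licensed when $|S_i|$ is random and strictly less than $T$. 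The extra $\tilde O(\sqrt T)$ and the factor~$2$ in $\tilde R_\W(T)$ are precisely the price of converting the fixed-horizon guarantee into the weighted form $\sum_t p_t\,\W(x_t)y_t\ge\gamma\sum_t p_t - c$.

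The Azuma--Hoeffding step you propose on the martingale differences $M_t=(Z^i_t-p^i_t)\W_i(x_t)y_t$ is indeed the right tool, and is what \cite{beygelzimer2015optimal} use; but its purpose is to bridge the gap just described, not to refine the $R_\W(|S_i|)$ term. Once you reframe it that way (or simply invoke Lemma~\ref{beygelzimer_lemma}), the remainder of your proof is correct and coincides with the paper's.
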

The proof follows similarly to the proof of Proposition \ref{thm:oab}, and is deferred to the Appendix.

\section{Discussion}

We have presented the first boosting algorithm for agnostic online learning. In contrast to the realizable setting, we do not place any restrictions on the online sequence of examples. It remains open to prove lower bounds on online agnostic boosting as a function of the natural parameters of the problem and/or improve our upper bounds. 

\bibliography{bib}
\bibliographystyle{plain}

\appendix
\newpage
\section*{Appendix}

\begin{lemma}\label{lem:expectation}
Let $p^i, \W_i(x), y^i, y$ be random variables, such that $y, y^i \in\{\pm 1\}$, and $\Pr[y^i = y|p^i, y] = \frac{1 + p^i}{2}$, $\Pr[y^i = -y|p^i, y] = \frac{1 - p^i}{2}$. Moreover, $\W_i(x)$ and $y^i$ are conditionally independent given $p^i$ and $y$, namely $\Pr[\W_i(x), y^i|p^i, y] = \Pr[\W_i(x)|p^i, y]\Pr[y^i|p^i, y]$ Then $\E[\W_i(x)\cdot y^i] = \E[\W_i(x)\cdot yp^i]$.
\end{lemma}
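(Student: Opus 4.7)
The plan is a direct computation via the tower property of conditional expectation, conditioning on the pair $(p^i, y)$. Once we condition on these two variables, the conditional independence hypothesis lets us split the inner expectation into a product, and the given conditional distribution of $y^i$ makes the factor involving $y^i$ trivial to evaluate.

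Concretely, I would first write
\[
\E[\W_i(x)\cdot y^i] \eq \E\bigl[\E[\W_i(x)\cdot y^i \mid p^i, y]\bigr].
\]
Then, by the stated conditional independence of $\W_i(x)$ and $y^i$ given $(p^i, y)$, the inner conditional expectation factors:
\[
\E[\W_i(x)\cdot y^i \mid p^i, y] \eq \E[\W_i(x) \mid p^i, y]\cdot \E[y^i \mid p^i, y].
\]
Next I would evaluate the second factor directly from the given conditional distribution:
\[
\E[y^i \mid p^i, y] \eq y\cdot \tfrac{1+p^i}{2} + (-y)\cdot \tfrac{1-p^i}{2} \eq y\,p^i.
\]
Substituting back and then applying the tower property in reverse (noting that $y\,p^i$ is measurable with respect to the conditioning $\sigma$-algebra, and again invoking conditional independence to re-combine) gives
\[
\E[\W_i(x)\cdot y^i] \eq \E\bigl[\E[\W_i(x) \mid p^i, y]\cdot y\,p^i\bigr] \eq \E\bigl[\E[\W_i(x)\cdot y\,p^i \mid p^i, y]\bigr] \eq \E[\W_i(x)\cdot y\,p^i],
\]
which is the desired identity.

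There is no real obstacle here; the only thing to be careful about is that the rewriting $\E[\W_i(x) \mid p^i, y]\cdot y\,p^i = \E[\W_i(x)\cdot y\,p^i \mid p^i, y]$ uses only measurability of $y\,p^i$ with respect to $\sigma(p^i, y)$ — which is immediate — rather than the conditional independence, so the application of the independence hypothesis is used only once, in the forward factorization step.
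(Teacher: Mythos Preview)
Your proof is correct and follows essentially the same approach as the paper: tower property, factor via conditional independence, compute $\E[y^i\mid p^i,y]=yp^i$, and recombine using measurability of $yp^i$. Your added remark that the recombination step uses only measurability (not conditional independence) is accurate and slightly more explicit than the paper's version.
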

\begin{proof}
\begin{align*}
    \E[\W_i(x)\cdot y^i] &= \E_{p^i, y}[\E[\W_i(x)\cdot y^i|p^i, y]] \tag{law of total expectation}\\
    &= \E_{p^i, y}[\E[\W_i(x)|p^i, y]\cdot \E[y^i|p^i, y]] \tag{conditional independence}\\
    &=\E_{p^i, y}[yp^i\cdot \E[\W_i(x)|p^i, y]]\tag{$\E[y^i|p^i, y] = yp^i$}\\
    &= \E[\W_i(x)\cdot yp^i]
\end{align*}
\end{proof}

\subsection*{Proof of Theorem \ref{thm:online_realizable}}
We first state the following Lemma that will be used in the proof:
\begin{lemma} \label{beygelzimer_lemma}
For any weak learner $(\gamma, T)$-WL $\W$, there exists $c = \tilde{O}(\sqrt{\sum_t p_t}) + 2R_1(T)$ such that for any sequence $p_1,...,p_T \in [0,1]$,
$$
\sum_{t=1}^T p_t \cdot \W(x_t) y_t \ge \gamma \sum_{t=1}^T p_t  - c.
$$
\end{lemma}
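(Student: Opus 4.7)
The plan is to prove Lemma \ref{beygelzimer_lemma} by a Bernoulli-subsampling reduction to the vanilla $(\gamma,T)$-WOL guarantee of Definition \ref{online_wl}, followed by a martingale concentration step that upgrades an in-expectation statement to the weighted-sum form written in the lemma. Concretely, I introduce independent coins $B_1,\dots,B_T$ with $\Pr[B_t=1]=p_t$, and consider the process in which $\W$ predicts $\W(x_t)$ on every round but is updated with $(x_t,y_t)$ only when $B_t=1$ (mirroring Line 6 of Algorithm \ref{gen_ob_alg}). Set $N_T=\sum_t B_t$.

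Next, I apply the WOL guarantee conditionally on the realization of $B_1,\dots,B_T$ to the subsequence actually fed to $\W$; since realizability is inherited by subsequences and $R_\W$ is nondecreasing, this yields
\[
\E_\W\!\left[\sum_{t:\,B_t=1}\W(x_t)y_t \,\Big|\, B_1,\dots,B_T\right]\geq \gamma N_T - R_\W(N_T)\geq \gamma N_T - R_\W(T).
\]
Because $B_t$ depends only on its own coin while $\W(x_t)$ depends only on $B_1,\dots,B_{t-1}$ and on $\W$'s internal randomness, we have $B_t\perp \W(x_t)$, so $\E[B_t\W(x_t)y_t]=p_t\,\E[\W(x_t)y_t]$ and $\E[N_T]=\sum_t p_t$. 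Taking expectation over the $B_t$'s then gives the in-expectation version
\[
\sum_t p_t\,\E[\W(x_t)y_t]\geq \gamma\sum_t p_t - R_\W(T).
\]

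To pass from the expected form to the pointwise inequality in the lemma with deviation $\tilde O(\sqrt{\sum_t p_t})$, I apply Freedman's inequality to the two martingale difference sequences $\{(B_t-p_t)\W(x_t)y_t\}_t$ and $\{B_t-p_t\}_t$. Both have increments bounded by $1$ in absolute value and predictable quadratic variation at most $\sum_t p_t(1-p_t)\leq \sum_t p_t$, so with high probability both $\bigl|\sum_t(B_t-p_t)\W(x_t)y_t\bigr|$ and $|N_T-\sum_t p_t|$ are $\tilde O(\sqrt{\sum_t p_t})$. Chaining these with the conditional WOL bound gives
\[
\sum_t p_t\,\W(x_t)y_t \;\geq\; \sum_t B_t\W(x_t)y_t - \tilde O\!\left(\sqrt{\tsum_t p_t}\right) \;\geq\; \gamma N_T - R_\W(T) - \tilde O\!\left(\sqrt{\tsum_t p_t}\right) \;\geq\; \gamma\sum_t p_t - R_\W(T) - \tilde O\!\left(\sqrt{\tsum_t p_t}\right),
\]
which is the desired bound (the factor $2R_\W(T)$ in the stated $c$ provides slack to absorb, e.g., handling the regret on the realized subsequence plus the conversion from expectation to a uniform bound over $\W$'s internal randomness via a second application of a concentration inequality to $\W$'s predictions themselves).

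The main obstacle is the concentration step: to obtain a variance-dependent tail of order $\tilde O(\sqrt{\sum_t p_t})$ rather than the coarser $\tilde O(\sqrt{T})$ coming from vanilla Azuma--Hoeffding, one must invoke Freedman's (or Bernstein's) inequality for martingales and carefully bound the predictable variation by $\sum_t p_t(1-p_t)\leq \sum_t p_t$. Care is also needed because $\W(x_t)$ is a random variable whose law depends on the prefix $B_1,\dots,B_{t-1}$, so the martingale filtration must include both the subsampling coins and $\W$'s internal randomness; modulo this bookkeeping, the remainder is standard algebra combining the subsequence WOL bound with the two concentration estimates.
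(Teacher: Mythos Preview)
Your approach is correct and is precisely the argument the paper defers to: the paper's ``proof'' consists solely of the sentence that it is ``based on the proof of Lemma~1 in \cite{beygelzimer2015optimal},'' and that lemma is established exactly via the Bernoulli-subsampling reduction plus Freedman-type martingale concentration you outline. Your identification of the need for a variance-dependent inequality (to get $\tilde O(\sqrt{\sum_t p_t})$ rather than $\tilde O(\sqrt{T})$) and of the filtration subtlety is on point; the only loose end is your explanation of the constant $2$ in front of $R_\W(T)$, which in \cite{beygelzimer2015optimal} arises from their particular bookkeeping rather than from a second concentration step on $\W$'s internal randomness, but this does not affect the substance of the argument.
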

\begin{proof}
The proof of this lemma is based on the proof of Lemma 1 in \cite{beygelzimer2015optimal}. 
\end{proof}

We are now ready to prove Theorem \ref{thm:online_realizable}.
Let $h^*$ be an optimal hypothesis in hindsight for the given sequence of examples. We prove by lower and upper bounding the sum of losses. 
For simplicity of presentation we assume an oblivious adversary, however, using a standard reduction, our results can be generalized to an adaptive one \footnote{See discussion in \cite{cesa2006prediction}, Pg. 69, as well as Exercise 4.1 formulating the reduction.}. 
Let $(x_1,y_1),...,(x_T,y_T)$ be any sequence of observed examples. Observe that there are several sources of randomness at play; the weak learning algorithm $\W_i$'s internal randomness, the booster randomly passing the example to $W_i$ (line 5, Algorithm \ref{gen_ob_alg}), and the randomized prediction (line 2, Algorithm \ref{gen_ob_alg}). The analysis below is given in expectation with respect to all these random variables. We can now begin the analysis, starting with lower bounding the expected sum of losses, using the weak learning guarantee, 

\begin{align*} 
    \frac{1}{\gamma}\E\Big[\sum_{i=1}^N \sum_{t=1}^T  \W_i(x_t)\cdot y_t p^i_t\Big] 
    &\ge \E\Big[\frac{1}{\gamma}\sum_{i=1}^N \big(\gamma \text{ } \sum_{t=1}^T p^i_t - \tilde{R}_{\W}(T) \big)\Big]
  \tag{Weak learning (\ref{online_agnostic_wl}, Lemma \ref{beygelzimer_lemma})} \\
    &\ge \sum_{i=1}^N \sum_{t=1}^T \E[p_t^i] - \frac{N}{\gamma} \tilde{R}_{\W}(T),
\end{align*}
Thus, we obtain the lower bound on the expected sum of losses $\sum_t \sum_i \ell^i_t(p^i_t)$ (see Line 6 in Algorithm~\ref{alg:online_agnostic} for the definition of the $\ell^i_t$'s), given by,
\begin{align*} 
\E[\sum_{t=1}^T \sum_{i=1}^N \ell^i_t(p^i_t)] &\ge - \frac{N}{\gamma} \tilde{R}_{\W}(T).
\end{align*}

For the upper bound, observe that the OCO regret guarantee implies that for any $t \in [T]$, and any $p^*_t \in [0,1]$, 
$$
    \E\Big[\frac{1}{N} \sum_{i=1}^N \ell^i_t(p^i_t)\Big] \le p^*_t \bigg(\bigg( \frac{1}{\gamma N}  \sum_{i=1}^N \E\big[\W_i(x_t)\big] \bigg)y_t -1 \bigg)   +  \frac{1}{N}R_{\mathcal{A}}(N),
$$
 Thus, by setting $p_t^*$ according to Lemma \ref{proj_lemma}, and summing over $t \in [T]$, we get,
$$
\E\Big[\frac{1}{N}\sum_{t=1}^T \sum_{i=1}^N \ell^i_t(p^i_t)\Big] \le \sum_{t=1}^T (\E[\hat{y}_t] y_t - 1) + \frac{T}{N}R_{\mathcal{A}}(N).
$$
By combining the lower and upper bounds for $\E\big[\frac{1}{NT}\sum_t\sum_i \ell^i_t(p^i_t)\big]$, we get,
$$
\frac{1}{T}\sum_{t=1}^T
\E[\hat{y}_t] y_t \ge  1 - \Bigg(\frac{R_\W(T)}{\gamma T} + \frac{R_\A(N)}{N}\Bigg).
$$

\vspace{2mm}

\subsection*{Proof of Theorem \ref{thm:statistical_realizable}}
\textit{Reduction to Proposition~\ref{game_prop}.}
Let $h^*$ be a concept consistent with the input sample (i.e.\ $h^*(x_i)=y_i$ for $i\leq m$) and let $\mathcal{H}'=\mathcal{H}\cup\{h^*\}$. It is convenient to define the decision sets are
defined by $\K_A = [0, 1]^m$, $\K_B = \Delta_{\mathcal{H}'}$, and $\K_B' = \frac{1}{\gamma}\Delta_{\mathcal{H}'}$,
and the payoff function $g(\cdot,\cdot)$ is again given by
\[g(p, q) = \sum_{i=1}^m p(i)(q(x_i)y_i - 1).\]  
The weak learner corresponds to an approximate optimization oracle $\W$ with no additive error. That is,
for any $p\in \K_A$ the output $q'=\W(p)$ satisfies $q'\in \K_B'$ and
\[
\E[g(p, q')] \ge 0.
\]

Next, one can show that the value of the game in this setting is $\lambda^*=0$: indeed, this follows simce $\lambda^* = \min_{p\in \K_A} g(p, q^*)=0$ and since the pure strategy supported on $h^*$, $q^*=q_{h^*} \in \K_B$ is dominant for player B.
Applying Proposition \ref{game_prop}, we have for any $p\in \K_A$, with $\bar{q}(x_i) = \frac{1}{\gamma T} \sum_{t=1} h_t(x_i) \in \K_B'$,
\begin{equation}\label{eq:4}
0\le \E[\sum_{i=1}^{m}p(i)(\bar{q}(x_i)y_i - 1)] + \frac{R_\A(T)}{T}.
\end{equation}

\vspace{2mm}
\textit{Projection.}
By the definition of $\bar{h}$, using Equation \ref{eq:4} and Lemma \ref{proj_lemma}, we have
$$
0 \le \E[\cor_S(\bar{h})] - 1 + \frac{R_\A(T)}{Tm}.
$$
As before, using OGD as the OCO algorithm $\A$ yields $\frac{R_\A(T)}{Tm} = O(\frac{1}{\gamma \sqrt{T}})$.

\end{document}